\newcommand{\one}{\mathbbm{1}}
\theoremstyle{plain}
\newtheorem{lemma}{Lemma}
\def\lc{\left\lceil}   
\def\rc{\right\rceil}
\title{Transformer Working Memory Enables Regular Language Reasoning And Natural Language Length Extrapolation}
\author{Ta-Chung Chi \\
  Carnegie Mellon University \\
  \texttt{tachungc@andrew.cmu.edu} \\\And
  Ting-Han Fan \\
  Princeton University \\
  \texttt{tinghanf@princeton.edu} \\\AND
  Alexander I. Rudnicky \\
  Carnegie Mellon University \\
  \texttt{air@cs.cmu.edu} \\\And
  Peter J. Ramadge \\
  Princeton University \\
  \texttt{ramadge@princeton.edu}
}
\begin{document}
\maketitle
\begin{abstract}
Unlike recurrent models, conventional wisdom has it that Transformers cannot perfectly model regular languages.
Inspired by the notion of working memory, we propose a new Transformer variant named RegularGPT. With its novel combination of Weight-Sharing, Adaptive-Depth, and Sliding-Dilated-Attention, RegularGPT constructs working memory along the depth dimension, thereby enabling efficient and successful modeling of regular languages such as PARITY.
We further test RegularGPT on the task of natural language length extrapolation and surprisingly find that it rediscovers the local windowed attention effect deemed necessary in prior work for length extrapolation.
\end{abstract}

\section{Introduction}
It is long believed that~\emph{Working Memory} (WM), a term coined in 1960s to liken human minds to computers, plays an important role in humans' reasoning ability and the guidance of decision-making behavior~\cite{baddeley1974working,baddeley1992working,ericsson1995long,cowan1998attention,miyake1999models,oberauer2002access,diamond2013executive,adams2018theories}. While no single definition encompasses all applications of WM~\cite{adams2018theories}, the following one should be shared by all of the theories of interest:
\begin{quote}
\emph{Working memory is a system of components that holds a \textbf{limited amount} of information temporarily in a heightened state of availability for use in ongoing processing.} - \citet{adams2018theories}
\end{quote}

WM is instantiated in the two major driving forces of sequence modeling:
Recurrent neural networks'(RNN)~\cite{elman1990finding,jordan1997serial,hochreiter1997long} short term memory modulated by their recurrent nature and gate design~\cite{rae2020transformers,nematzadeh2020memory,armeni-etal-2022-characterizing}, and Transformers'~\cite{NIPS2017_3f5ee243} salient tokens heightened by self-attention.

In reality, self-attention often attends broadly~\cite{clark-etal-2019-bert}, violating the limited amount of information notion of WM.
Our hypothesis is that such violation is to blame for Transformers' failure on algorithmic reasoning of regular languages~\cite{deletang2023neural,liu2023transformers} such as PARITY, a seemingly simple task that checks if the number of 1s in a bit string is even. Surprisingly, a Transformer can only count the number of 1s correctly when the sequence length is held fixed at training sequence length $T_{tr}$, and it fails miserably when the testing sequence length extrapolates to $T_{ex} > T_{tr}$~\cite{hahn-2020-theoretical,bhattamishra-etal-2020-ability,chiang-cholak-2022-overcoming,deletang2023neural,liu2023transformers}. In contrast, an RNN can extrapolate perfectly.

The goal of this work is therefore to enable Transformers' WM by limiting the amount of accessible information at a time. Existing attempt that uses a combination of scratchpad and recency biases~\cite{wei2022chain,nye2022show,anil2022exploring,liu2023transformers} is not optimal as it completely foregoes the parallelization property of a Transformer, making it as computationally inefficient as an RNN.

This begs the question:~\emph{Does there exist a more efficient Transformer working memory design?}
The answer is affirmative thanks to the proposed~\textbf{RegularGPT}, which boils down to the three design choices: Weight-Sharing, Adaptive-Depth, and Sliding-Dilated-Attention; Each of them has been proposed previously but it is the unique combination that sparks the successful and efficient learning of regular languages.
We will further demonstrate its: 1) similar recursive parallel structure as linear RNN~\cite{orvieto2023resurrecting}, resulting in a $\log T_{tr,ex}$ number of layers, and 2) generalizability by showing strong performances on the task of Transformer natural language length extrapolation~\cite{alibi,kerple,sandwich}.

In this work, we use $[N]$ to denote the list of non-negative integers $[0,\dots, N-1]$. The Transformer model used in this work is always causal.
It takes in an input sequence of $T \leq T_{tr}$ units (can be tokens or bits) $\sigma_{i\in[T]}$, passes them through a fixed amount of $L$ transformer layers, and finally computes the distribution over the vocabulary $V$ via the prediction head $W_o$.

\begin{figure*}
    \centering
    \includegraphics[width=\linewidth]{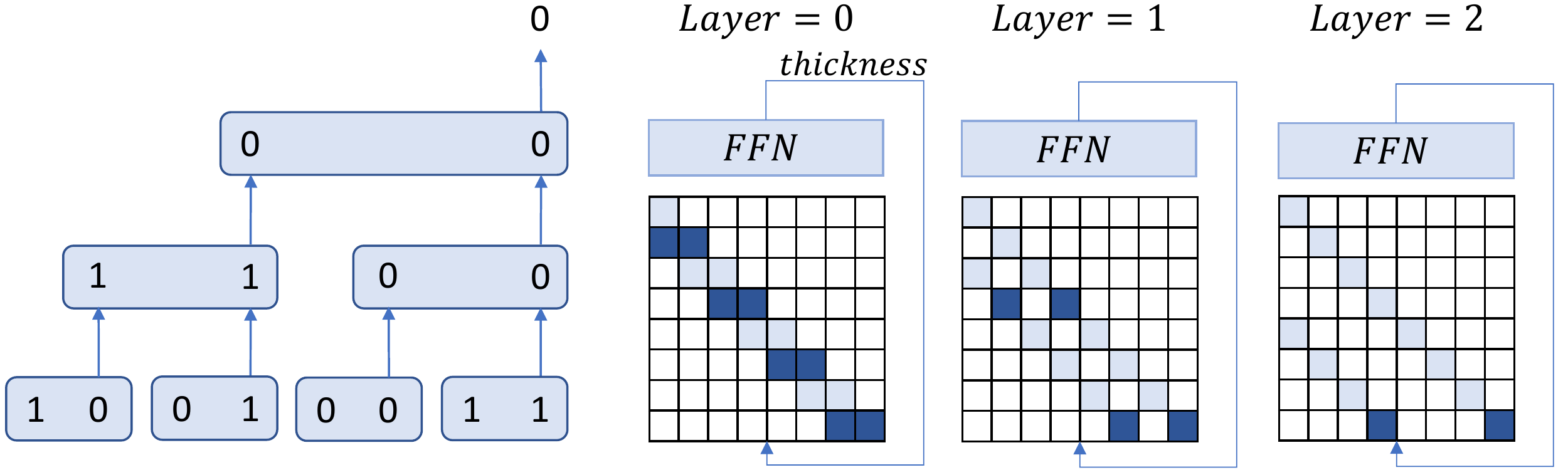}
    \caption{This is the divide-and-conquer approach that solves the PARITY problem. The lightly shaded blue cells represent $M^{(l)}_{mn}$ in eq.~(\ref{eq:mask}). The darkened blue cells represent the routing path to solve the result for the last bit specifically. As we can see, this approach requires at most $\log_2 T$ layers to obtain the result for a length $T$ input sequence, rendering it a more efficient approach compared to the combination of scratchpad and recency biases.}
    \label{fig:construction}
\end{figure*}

\section{Background}

\subsection{Regular Language and Algorithmic Reasoning}
\label{sec:fsa_fst}
The Chomsky hierarchy~\cite{chomsky1956three} classifies formal languages into different hierarchies based on their increasing complexity. Each hierarchy represents a family of formal languages that can be solved by the corresponding automaton.
At the lowest level resides the family of regular languages, which can be expressed using a finite state automaton (FSA), a computational model comprising a set of states and transitions connecting them.

Our primary objective is to enhance the algorithmic reasoning of the Transformer model on regular languages by testing its language transduction capability under the extrapolation setting. Concretely, the model is trained only to predict desired outputs on a set of short length-$T$ sequences with $T\leq T_{tr}$. Still, it must also predict the correct outputs for longer testing sequences of length $T_{ex}\gg T_{tr}$. It is worth noting that we evaluate our model via language transduction following recent work~\cite{deletang2023neural,liu2023transformers}, instead of the conventional language recognition protocol. Both settings are equally hard as they are underpinned by the same finite state semiautomaton. Interested readers may refer to~\citet{deletang2023neural} for further details regarding the two evaluation protocols. We also reveal the connection between RegularGPT and finite state semiautomaton later in \S\ref{sec:analysis}.

\subsection{Failure Mode and An Inefficient Fix}
\label{sec:attempts}
The PARITY task involves a length $T$ bit string $\sigma_1 \sigma_2\cdots \sigma_T$ where each bit $\sigma_i$ is randomly sampled from a Bernoulli distribution with $\mathbb{P}(\sigma_i=1)=0.5$. The goal is to determine whether the sequence contains an even or odd number of 1s.

It has been observed that a Transformer is incapable of performing length extrapolation on PARITY, but what could be its potential failure mode? Previous work sheds light on this by showing that a Transformer might settle on the naive-summation approach~\cite{anil2022exploring,deletang2023neural,liu2023transformers}. Concretely, it sums up all the bits and outputs the summation modulo 2. This approach fails since unseen summations will be produced when the model takes sequences of length $T_{ex} > T$ as input or $\mathbb{P}(S_i)$ deviates from 0.5.

To the best of our knowledge, the existing remedy~\cite{liu2023transformers,anil2022exploring} is to use scratchpad~\cite{wei2022chain,nye2022show} along with recency biases~\cite{alibi} to enforce the correct learning: They create a scratchpad that interleaves the sequence of input bits and intermediate answers $(\sigma_1,q_1, \sigma_2, q_2,\cdots,\sigma_T,q_T)$, where $q_i=\text{solve}(\sigma_1\cdots \sigma_i$). The model is trained to predict all the $\sigma_{i\in[T]}$.
Recency biases play the role of limiting a Transformer's receptive field to only a few most recent $\sigma$ and $q$ at every timestep $i$.
This is to prevent self-attention from ignoring $q$ and giving the same naive-summation solution.

Scratchpad and recency biases jointly create the notion of WM along the temporal dimension similar to RNNs, thereby enabling successful extrapolation on regular languages.
Nevertheless, we note that this fix is inefficient during inference since all the intermediate answers $q_i$ have to be generated sequentially before reaching the final answer $q_T$. A desirable fix should only take in the input bits $(\sigma_1,\sigma_2, \cdots,\sigma_n)$ and directly generate the final answer $q_T$. In other words, our goal is to find an efficient WM design for a Transformer.

\subsection{A Desirable Fix for PARITY (Figure~\ref{fig:construction})}
An alternative solution to the PARITY problem is based on the spirit of divide-and-conquer, where we first divide the sequence into $T/C$ chunks with each chunk of length $C<T$, and we compose the final answer by recursively merging the chunk outputs.
This approach does not suffer from the unseen summation issue as the model was trained to handle a fixed amount of $C$ bits at a time in its WM (chunk). It then recursively applies the already-seen results to compose the final solution when it encounters longer sequences during inference. More importantly, it is more efficient than the scratchpad and recency biases approach since it only requires $\log_C T$ layers of parallel computations instead of $2T$ steps of sequential decoding.

\section{Proposed Architecture of RegularGPT}
\label{sec:proposed}

We present our modifications to the vanilla Transformer below.
Only the related operations will be expanded, and we follow all the other details of GPT2~\cite{radford2019language}. 

\subsection{Sliding-Dilated-Attention} A Transformer layer at layer $l$ consists of a self-attention operation denoted as $\text{SA}^{(l)}$ and feed-forward network denoted as $\text{FFN}^{(l)}$.
Originally, $\text{SA}^{(l)}$ computes the inter-token relationships across all $T$ units. Instead, we set the chunk size to $C$ and produce $T/C$ non-overlapping chunks;\footnote{Whenever $T$ is not divisible by $C$, we pad the input sequence such that its length is a multiple of $C$.} Only the units within the same chunk inter-attend with each other.
In practice, this can be achieved by an attention mask $M^{(l)}\in \mathbb{R}^{T\times T}$ at layer $l$. $M^{(l)}$ shares the same shape as the self-attention matrix (see Figure~\ref{fig:construction}) and is defined as:
\begin{equation}
    M_{mn}^{(l)}= 
\begin{cases}
    r_{(m-n)/C^\ell} ,& \text{if } \frac{m-n}{C^l}\in[C] \\
    -\inf,              & \text{otherwise}
\end{cases}
\label{eq:mask}
\end{equation}
Note that $M$ is a lower triangular matrix due to the causal nature of our model.
$r_i$'s with $i\in[C]$ are learnable relative positional scalars. To be precise, each attention head has a different set of learnable biases $r_i$'s. Here, we drop the dependency on the head for notational simplicity.

The use of $r_i$'s is similar to the positional scalars of T5~\cite{rae2020transformers} except that we do not use the log-binning strategy over $m-n$. It is to facilitate the extraction of global information instead of enforcing the windowed-attention effect~\cite{raffel2020exploring,alibi,kerple,sandwich}.
$M$ will then be added to the original self-attention matrix, creating the proposed Sliding-Dilated-Attention effect.
The output of $\text{SA}^{(l)}$ will be transformed by the positional-independent $\text{FFN}^{(l)}$ to produce $o^{(l)}_{i\in[T]}$.

The case of $C=2$ is used as a possible construction of Theorem 1 in~\citet{liu2023transformers}. However, their focus is not on length extrapolation, hence lacking the below two proposed modifications.

\subsection{Adaptive-Depth and Weight-Sharing} Since our Sliding-Dilated-Attention limits the number of accessible tokens at a time, we need an adaptive depth $\bar L=\log_C T$ so that the final output can utilize every single piece of input information.
However, when $T_{ex}>T_{tr}$, the depth during inference will be higher than that during training. The simplest way to solve this challenge without further parameter updating is to perform Weight-Sharing across layers. To account for the possible performance loss due to Weight-Sharing, we first thicken the model by $K$ times, resulting in a total number of $K\cdot \bar L$ layers.
Next, we share the weights across the $K\cdot \bar L$ layers in the following way for $k\in [K]$:
\begin{align*}
    \text{SA}^{(l\cdot K+k)}=&\text{SA}^{(k)}~~~\text{for}~l\in[\bar L]\\
    \text{FFN}^{(l\cdot K+k)}=&\text{FFN}^{(k)}~~~\text{for}~l\in[\bar L]
\end{align*}
It can be equivalently interpreted as stacking more SA and FFN components within every Transformer layer, and the same thickened layer is reused $\bar L$ times. This layer thickening design is only used in the natural language modeling experiments in \S\ref{sec:natural}.

\subsection{Where is the WM Notion?}
Instead of instantiating WM along the temporal dimension as the combination of scratchpad and recency biases, RegularGPT limits the amount of information along the depth dimension. As we have seen, the idea of breaking $T$ units into several chunks limits the amount of accessible information at each layer, thereby enabling the WM notion. A similar argument was made by~\citet{10.1162/tacl_a_00371} in a sense that they categorized Longformer~\cite{Beltagy2020Longformer}, a transformer variant with local attention pattern, as a model of working memory.
Finally, thanks to modern accelerators such as GPU, all chunks at a layer can be processed concurrently, and this further makes RegularGPT more favorable over the scratchpad and recency biases approach. 

\subsection{Complexity Analysis}
The sparse attention pattern of RegularGPT suggests it might enjoy the same speedup provided by sparsified Transformers.
The complexity of our model is $O(TCK\log_CT)$ where $TC$ is the complexity of each self-attention module and $K\log_CT$ is the total number of layers. On the other hand, the vanilla Transformer follows $O(T^2L)$. To illustrate the possible speedup, if $T=512$ and $C=128$, then $512\cdot 128\cdot K\log_{128}512<512^2 L$ when $K<\frac{512 L}{128\log_{128}512}\approx 3.11L$. Namely, as long as $K<3L$, our model is likely to be more efficient than a vanilla Transformer.

\begin{figure}
    \centering
    \includegraphics[width=\linewidth]{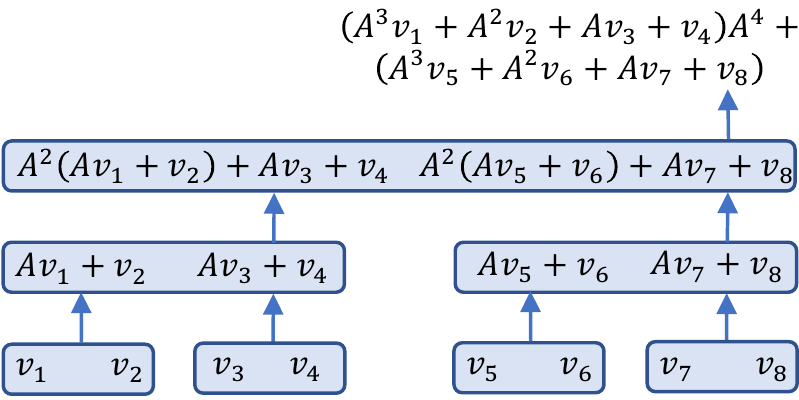}
    \caption{This is the parallel scan algorithm that can accelerate a linear RNN. In this example, we visualize the routing path for computing $x_8$. Blocks at the same layer can be computed in parallel on GPUs.}
    \label{fig:parallel_scan}
\end{figure}

\section{Connection to Prior Work}
\paragraph{Sliding-Dilated-Attention}
This special attention pattern dates back to pre-Transformer era such as Wavenet~\cite{wavenet} 
with dilated convolution. It can also be viewed as a special form of Longformer attention pattern with systematic dilation~\cite{Beltagy2020Longformer}.\footnote{The original Longformer also adopts dilated attention on a few heads at higher layers but without the systematic pattern used in this work.} Limiting the range of attention in lower layers of a Transformer is also corroborated in~\citet{rae-razavi-2020-transformers}, where they find such design does not deteriorate the performance.

\paragraph{Adaptive-Depth and Weight-Sharing} ALBERT~\cite{Lan2020ALBERT} and Universal Transformer~\cite{dehghani2018universal} share the parameters across layers. The weight sharing design makes them compatible with the idea of Adaptive Computation Time~\cite{graves2014neural} and Dynamic Halting~\cite{dehghani2018universal,Elbayad2020Depth-Adaptive}, which allocate different computational budget depending on the complexity of tasks~\cite{simoulin-crabbe-2021-many,NDR}. However, they lack the special Sliding-Dilated-Attention design that is necessary for ruling out naive solutions.

\paragraph{Linear RNN}
Given $x_0=0\in\mathbb{R}^N$ and the input vectors $u_1\cdots u_T$, a linear RNN~\cite{orvieto2023resurrecting} for $k\in[T]$ can be written as:
\begin{equation*}
    x_k = Ax_{k-1} + Bu_k = \sum_{j=0}^{k-1}A^j B u_{k-j} = \sum_{j=0}^{k-1}A^j v_{k-j},
\end{equation*}
where we set $v_{k-j}=Bu_{k-j}$.
The operation can be accelerated by the parallel scan algorithm that permits efficient cumulative sum~\cite{ladner1980parallel,blelloch1990prefix,lakshmivarahan1994parallel,martin2018parallelizing,liu2023transformers,smith2023simplified}. As we can see in Figure~\ref{fig:parallel_scan}, the routing path specified by the parallel scan algorithm is the same as our Sliding-Dilated-Attention illustrated in Figure~\ref{fig:construction}.

\begin{table*}[!ht]
    \setlength{\tabcolsep}{2pt}
    \centering
    \begin{tabular}{@{\extracolsep{3pt}}lccccc}
    \hline\hline
    \multirow{2}{*}{Task} & & \multirow{2}{*}{RNN} & \multirow{2}{*}{Transformer} & \multicolumn{2}{c}{RegularGPT} \\\cline{5-6} & & & & $C=2$ & $C=3$
    \\ \hline
\multicolumn{2}{l}{\it 1) \citeauthor{deletang2023neural}}\\    
Even Pairs          & & 100.0 / 100.0 & 99.7 / 73.2 & 100.0 /~~~89.3 & 100.0 / 96.6 \\
Modular Arithmetic  & & 100.0 / 100.0 & 21.9 / 20.3 &~~96.4 /~~~82.6 &~~21.2 / 20.5 \\
Parity Check        & & 100.0 /~~~98.9 & 52.3 / 50.1 & 100.0 / 100.0 & 100.0 / 88.7 \\
Cycle Navigation    & & 100.0 / 100.0 & 21.7 / 20.6 & 100.0 / 100.0 & 100.0 / 78.6 \\
\hline
\multicolumn{2}{l}{\it 2) \citeauthor{bhattamishra-etal-2020-ability}}\\
$D_2$    & & 100.0 / 100.0 & 100.0 / 80.1 & 100.0 / 100.0 &~~99.8 / 96.5 \\
$D_3$    & & 100.0 / 100.0 & 100.0 / 77.8 & 100.0 /~~~99.7 &~~98.6 / 93.0 \\
$D_4$    & & 100.0 / 100.0 & 100.0 / 82.6 & 100.0 /~~~98.7 &~~97.7 / 91.6 \\
$D_{12}$ & & 100.0 / 100.0 & 100.0 / 80.3 & 100.0 /~~~99.8 &~~94.1 / 90.4 \\
Tomita 3 & & 100.0 / 100.0 & 100.0 / 94.4 & 100.0 /~~~99.7 & 100.0 / 99.9 \\
Tomita 4 & & 100.0 / 100.0 & 100.0 / 70.0 & 100.0 /~~~99.8 & 100.0 / 99.3 \\
Tomita 5 & & 100.0 / 100.0 &~~74.5 / 74.5 & 100.0 /~~~99.8 &~~98.2 / 84.1 \\
Tomita 6 & & 100.0 / 100.0 &~~50.0 / 50.0 & 100.0 /~~~98.5 & 100.0 / 65.7 \\
    \hline\hline
    \end{tabular}
    \caption{\textbf{Length generalization results on Regular Languages (Max/Avg).} All models in the first section (\citeauthor{deletang2023neural}) are trained on sequences of length 40. The reported numbers are the average of length extrapolation results from 41 to 500. Each result is an average over 3 seeds.
    All models in the second section (\citeauthor{bhattamishra-etal-2020-ability}) are trained on sequences of length 50.
    The reported numbers are the average of length extrapolation results from 51 to 100. Each result is an average over 3 seeds. Please refer to Appendix~\ref{sec:regular_params} for the detailed hyperparameters.}
    \label{tab:regulars}
\end{table*}

\section{Regular Language Experiments}
\subsection{Language Transduction and Extrapolation}
First, we want to know if endowing a Transformer with the notion of WM really improves its length extrapolation capability on regular languages. We test RegularGPT and all the baselines on two sets of regular languages from prior work~\cite{deletang2023neural,bhattamishra-etal-2020-ability}.\footnote{Our implementation is based on the codebase of~\citet{deletang2023neural} at: \url{https://github.com/deepmind/neural_networks_chomsky_hierarchy}. We additionally implement the regular languages in the second section of Table~\ref{tab:regulars}.} Prior work often reports the maximum score across different hyperparameter settings and random seeds because their goal is to know if a model can extrapolate~\emph{at all}. We additionally report the average scores since we want to know if the model can consistently obtain good performance.
The baseline models we compare against are an RNN and vanilla Transformer with Transformer-XL style relative positional embedding~\cite{dai-etal-2019-transformer}.
Table~\ref{tab:regulars} shows that RegularGPT with $C=2$ acheives similar performance as an RNN and substantially outperforms a vanilla Transformer.

\subsection{The Effect of Chunk Size $C$}
We vary the chunk size $C$ of RegularGPT to see its impact on the performance. The motivation for using a larger $C$ is to reduce the number of layers (i.e., $\bar L=\log_C T$ decreases in $C$) and increase the degree of parallelization. However, in Table~\ref{tab:regulars}, a larger $C$ seems to pose a challenge to RegularGPT on the Modular Arithmetic task. Modular Arithmetic is a hard task with far more states and complicated state transitions. Increasing $C$ is likely to increase the task difficulty by composing more state transitions at once. We will have an in-depth discussion of the theoretical reasons in \S\ref{sec:analysis}.

\begin{table}
  \centering
  \small
  \begin{tabular}{lcccccc}
    \hline\hline
    \multirow{2}{*}{\bf Settings} & & \multicolumn{5}{c}{\bf Probability $\mathbb{P}(\sigma_i=1)$}\\
    \cline{3-7}
    & & 0.1 & 0.3 & 0.5 & 0.7 & 0.9 \\
    \hline
    \multicolumn{2}{l}{\it 1) Same Length}\\
    RegularGPT & & 100 & 100 & 100 & 100 & 100\\
    RNN & & 100 & 100 & 100 & 100 & 100\\
    Transformer & & 98.4 & 99.8 & 99.6 & 97.8 & 77.2 \\
    \hline
    \multicolumn{2}{l}{\it 2) Extrapolation}\\
    RegularGPT & & 100 & 100 & 100 & 100 & 100\\
    RNN & & 100 & 100 & 100 & 100 & 100\\
    Transformer & & 50.1 & 49.7 & 50.3 & 49.9 & 50.0 \\
    \hline\hline
  \end{tabular}
  \caption{We alter the probability $\mathbb{P}(\sigma_i=1)$ used to sample 1s of PARITY. The same length setting is 40. The extrapolation setting is from 41 to 500. Each entry is an average over 3 seeds.}
  \label{tab:parity_probs}
  \vspace{-3mm}
\end{table}

\subsection{Robust to Probability Changes}
Other than the length extrapolation experiment, we alter the probability of sampling 1s of PARITY, i.e., set $\mathbb{P}(\sigma_i)\neq 0.5$. The results in Table~\ref{tab:parity_probs} show that RegularGPT is robust to different sampling probabilities, indicating its successful modeling of the underlying regular language grammar. In contrast, a vanilla Transformer model struggles to achieve good performance even for the same length setting, again validating the fact that it only finds the naive-summation solution as discussed in \S\ref{sec:attempts}.

\begin{table*}[!ht]
    \setlength{\tabcolsep}{2pt}
    \centering
    \begin{tabular}{@{\extracolsep{3pt}}lccccccccc}
    \hline\hline
    \multirow{2}{*}{$L_{ex}$} & \multirow{2}{*}{KERPLE} & \multirow{2}{*}{T5} & \multirow{2}{*}{ALiBi} & \quad & \multicolumn{5}{c}{RegularGPT ($C/K$)} \\\cline{6-10} & & & & \quad & 32 / 6 & 64 / 6 & 128 / 6 & 128 / 12 & 256 / 6\\
    \hline
    512  & 24.71 & 24.50 & 24.53 & \quad & 32.06 & 30.17 & 28.80 & 26.37 & 27.90 \\
    1024 & 24.42 & 24.38 & 24.90 & & 32.03 & 30.30 & 28.94 & 26.91 & 34.38 \\
    2048 & 24.21 & 25.01 & 25.08 & & 791.74 & 30.56 & 29.14 & 27.08 & 34.85 \\
    4096 & 24.53 & 28.91 & 25.08 & & 812.00 & 30.80 & 29.25 & 27.28 & 35.11 \\
    8192 & 24.74 & 39.08 & 25.08 & & 818.49 & 1175.91 & 29.41 & 27.39 & 35.42 \\
    \hline\hline
    \end{tabular}
    \caption{\textbf{Natural language extrapolation results on OpenWebText2.} The training length is 512. The numbers are averaged over three random seeds. Please refer to Appendix~\ref{sec:natural_params} for the detailed hyperparameters.}
    \label{tab:natural}
    \vspace{-3mm}
\end{table*}

\section{Natural Language Experiments}
\label{sec:natural}
Given that RegularGPT has been battle-tested on the main experiment of regular languages, we now shift gear to benchmark its performance in the natural language scenario. 
Given a model trained on sequences of length $T_{tr}$, we test it on much longer sequences of length $T_{ex}\gg T_{tr}$ during inference, and the goal is to observe similar perplexities. We ensure only the perplexity of the last token in a sequence is used to compute the result so that we do not suffer from the early token curse~\cite{alibi,sandwich}. We average over 1,000 sequences and report their averaged perplexities. We compare our model against the existing methods that are known to demonstrate the ability of length extrapolation including T5~\cite{raffel2020exploring}, ALiBi~\cite{alibi}, and KERPLE~\cite{kerple}.\footnote{We use the nanoGPT codebase: \url{https://github.com/karpathy/nanoGPT}, and the OpenWebText2 dataset: \url{https://huggingface.co/datasets/the\_pile\_openwebtext2.}}
To counteract the loss of expressive power due to weight sharing, we thicken each layer of RegularGPT to $K$ as detailed in \S\ref{sec:proposed}.

In Table~\ref{tab:natural}, we first observe exploding perplexities for $C=32$ after $L_{ex}\geq2048$. RegularGPT might only learn to model $\lc\log_{32}512\rc=2$ layers during training, hence it fails to recursively model more than $32^2=1024$ tokens during inference. This is validated by $C=64$ since this time it is able to extrapolate until $64^{\lc\log_{64}512\rc}=4096$. While the above argument seems to suggest large $C$, setting $C=256$ also deteriorates the performance. This might be due to the limited number of chunks ($512/256=2$) and $r_i$'s (in Eq.~\eqref{eq:mask}) observed at the second layer, making the learning of $r_i$'s harder. Overall, $C$ is a hyperparameter that needs to be carefully decided for RegularGPT on natural languages.
We also observe that 128/12 performs better than 128/6, implying RegularGPT's performance could be improved by stacking more layers to counteract the performance loss due to Weight-Sharing.

It is worth noting that 128/12 performs relatively well and is close to previous methods designed specifically for the task of natural language extrapolation. We will analyze its inner workings in depth in Figure~\ref{fig:rec_plot} and \S\ref{sec:analysis}, in which we find that RegularGPT learns the similar local receptive field as prior work, which is likely the key to its successful natural language extrapolation performance.

\section{Discussion and Analysis}
\label{sec:analysis}
\subsection{Regular Language and Finite State Semiautomaton}
Regular language is the type of formal language recognized by an FSA~\cite{chomsky}, which is a 5-tuple $(Q,\Sigma,\delta,q_0,F)$, where $Q$ is a finite non-empty set of states, $\Sigma$ is a finite non-empty set of symbols, $q_0\in Q$ is an initial state, $\delta:Q\times\Sigma\to Q$ is a transition function; $F\subseteq Q$ is a set of final states.
However, some of our tasks are better modeled by a finite-state transducer (FST) as discussed in \S\ref{sec:fsa_fst}. 
To underpin both FSA and FST, we consider a semiautomation $\mathcal{A}=(Q,\Sigma,\delta)$ (i.e., an FSA without $q_0$ and $F$) and establish its connection to a Transformer model.

Let $\sigma_{a:b}$ be the sequence from position $a$ (inclusive) to $b$ (exclusive) out of a length $T$ input sequence (i.e., $0\leq a<b\leq T$). We define $\mathcal{A}(\sigma_{a:b}):Q\rightarrow Q$ as the $(b-a)$-step state transition relation after receiving $\sigma_{a:b}$.
$$
\mathcal{A}(\sigma_{a:b})=\delta(\cdot|\sigma_{b-1})\circ\dots\circ \delta(\cdot|\sigma_{a}),
$$
where $f(\cdot)\circ g(\cdot)=f(g(\cdot))$ denotes function composition. With abuse of notation, we define $\mathcal{A}_q(\sigma_{a:b})\in Q$ as the state after receiving $\sigma_{a:b}$ if starting at $q\in Q$.
$$
\mathcal{A}_q(\sigma_{a:b})=\delta(\cdot|\sigma_{b-1})\circ\dots\circ \delta(\cdot|\sigma_{a})\circ q.
$$

\subsection{Modeling Transition Composition}
We want to show that the layers of RegularGPT with chunk size $C=2$ can model the composition of two transition functions: $$\mathcal{A}(\sigma_{a:b}) = \mathcal{A}(\sigma_{i:b})\circ \mathcal{A}(\sigma_{a:i}) ~\text{for}~i\in [a+1,\dots,b).$$
This way, the regular language problem can be solved recursively using the construction outlined in \S\ref{sec:proposed} and Figure~\ref{fig:construction}. To formalize the statement, we first observe that $\mathcal{A}(\sigma_{a:b})$, $\mathcal{A}(\sigma_{a:i})$, and $\mathcal{A}(\sigma_{i:b})$ can be represented in $\mathbb{R}^{|Q|^2}$:
\begin{equation}
    \mathcal{A}(\sigma_{a:b}) = \begin{bmatrix}
\text{OneHot}_{|Q|}(\mathcal{A}_{q_0} (\sigma_{a:b}))\\
\text{OneHot}_{|Q|}(\mathcal{A}_{q_1} (\sigma_{a:b}))\\
\cdots\\
\text{OneHot}_{|Q|}(\mathcal{A}_{q_{|Q|-1}} (\sigma_{a:b}))
\end{bmatrix} \in \mathbb{R}^{|Q|^2},
\label{eq:state_transition}
\end{equation}
where $\text{OneHot}_{|Q|}(i)$ is a one-hot vector of length $|Q|$ with the $i$-th index being 1.

The next step is to mix $\mathcal{A}(\sigma_{a:i})$ and $\mathcal{A}(\sigma_{i:b})$ together and get $\mathcal{A}(\sigma_{a:b})$. We show in Lemma~\ref{lem:approx-bin-prod} that a 2-layer ReLU network can learn (and so can a transformer layer) the composition. The proof of Lemma~\ref{lem:approx-bin-prod} is deferred to Appendix~\ref{sec:proof}.

\begin{lemma}[Approximation for Binary Matrix Product]
	Let $A,B\in\{0,1\}^{n\times n}$ be binary matrices of dimension $n\times n$. Then, there exists a two-layer ReLU network such that 
    $$f_{\text{mlp}}([\text{Flat}(A), \text{Flat}(B)])=\text{Flat}(AB),$$
    where $\text{Flat}(X)_{(i-1)n+j}=X_{i,j}$~~~$\text{for}~i,j\in[n]$ is the operation that flattens a matrix into a vector.
    \label{lem:approx-bin-prod}
\end{lemma}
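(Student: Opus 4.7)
The plan rests on the identity $xy = \mathrm{ReLU}(x+y-1)$ for $x,y\in\{0,1\}$, which gives a single-neuron gadget that computes the AND of two binary inputs. Since $(AB)_{ij} = \sum_{k=1}^{n} A_{ik}B_{kj}$ is precisely a sum of such products of binary entries, a hidden layer of ReLU units can realize each term and a linear readout can sum them. So the construction reduces to writing down two weight matrices of the right block form.

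Concretely, I would index the hidden layer by triples $(i,k,j)\in[n]^3$, giving width $n^3$. For each triple, the first-layer weight row selects the two input coordinates corresponding to $A_{ik}$ and $B_{kj}$ from the concatenated input $[\mathrm{Flat}(A),\mathrm{Flat}(B)]\in\mathbb{R}^{2n^2}$ with coefficient $+1$ and all others with $0$, and the bias is $-1$. The pre-activation then equals $A_{ik}+B_{kj}-1$, so the ReLU output is exactly $A_{ik}B_{kj}$ because both inputs lie in $\{0,1\}$. The second (linear) layer, indexed by pairs $(i,j)\in[n]^2$, sums the hidden units over $k$, placing $\sum_k A_{ik}B_{kj}=(AB)_{ij}$ into coordinate $(i-1)n+j$ of the output. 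No output ReLU is needed; if desired, one can still apply one since $(AB)_{ij}\geq 0$.

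The main obstacle is purely bookkeeping: one must line up the selector coefficients of the first layer and the summation pattern of the second layer with the flattening convention $\mathrm{Flat}(X)_{(i-1)n+j}=X_{ij}$, and verify that there is no interference between the two halves of the input that hold $\mathrm{Flat}(A)$ and $\mathrm{Flat}(B)$. There is no real mathematical difficulty beyond the AND identity, because the ReLU is doing all the nonlinear work entry by entry and the rest of the computation is linear.

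Finally, I would observe that this construction uses $n^3$ hidden units, which is within the capacity of a standard transformer feed-forward block of sufficient width, so the lemma supplies the missing ingredient for the recursive construction of Section~\ref{sec:proposed}: once attention routes the two adjacent transition representations $\mathcal{A}(\sigma_{a:i})$ and $\mathcal{A}(\sigma_{i:b})$ (each a binary matrix in $\mathbb{R}^{|Q|^2}$) into the same position, the FFN can produce $\mathcal{A}(\sigma_{a:b})$ in a single layer.
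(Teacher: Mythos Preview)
Your proposal is correct and matches the paper's own proof essentially line for line: both use the binary AND identity $\mathrm{ReLU}(a+b-1)=ab$, index a width-$n^3$ hidden layer by triples $(i,j,k)$ whose first-layer weights select $A_{ik}$ and $B_{kj}$ with bias $-1$, and then use a second linear layer to sum over $k$ into the $n^2$ output coordinates. The only differences are cosmetic (the paper writes out the explicit index formulas for $W^{(1)}$ and expresses $W^{(2)}=I_{n^2}\otimes \one_n$ as a Kronecker product), so there is nothing to add.
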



Now, we can relate Lemma~\ref{lem:approx-bin-prod} to the FFN layers in RegularGPT. Following \S\ref{sec:proposed}, when chuck size $C=2$ and thickness $K=1$, the output vector $o_i^{(l)}$ depends on input sequence $\sigma_{i-2^{l+1}+1:i+1}$. Also, $o_i^{(l)}$ is computed from $o_{i-2^l}^{(l-1)}$ and $o_{i}^{(l-1)}$, which depend on input sequences $\sigma_{i-2^{l+1}+1:i-2^l+1}$ and $\sigma_{i-2^l+1:i+1}$, respectively. This observation implies that $o_i^{(l)}$ likely models the transition function $\mathcal{A}(\sigma_{i-2^{l+1}+1:i+1})$, which we denote as $o_i^{(l)}\sim\mathcal{A}(\sigma_{i-2^{l+1}+1:i+1})$. We will verify this assumption in \S\ref{sec:veri}.

If $o_i^{(l)}\sim\mathcal{A}(\sigma_{i-2^{l+1}+1:i+1})$ is true, Lemma~\ref{lem:approx-bin-prod} implies that RegularGPT's FFN models the transition function composition. This is immediate by setting $o_{i-2^l}^{(l-1)}\sim\text{Flat}(\mathcal{A}(\sigma_{i-2^{l+1}+1:i-2^l+1}))$, $o_{i}^{(l-1)}\sim\text{Flat}(\mathcal{A}(\sigma_{i-2^l+1:i+1}))$ and recognizing the fact that function composition is a matrix product under the representation of Eq.~\eqref{eq:state_transition}.

The next step is to explain the use of self-attention layers in RegularGPT. Although Lemma~\ref{lem:approx-bin-prod} has established a composition, it is unclear how the transitions are concatenated in the first place (i.e., $[\text{Flat}(A), \text{Flat}(B)]$). With a two-head self-attention and the learnable relative positional scalars, it is possible to adjust them so that the attention output contains the concatenated information $[\text{Flat}(A), \text{Flat}(B)]$.

Recall in Eq.~\eqref{eq:mask}, each head has a different set of scalars $r_i$'s. One concrete construction for concatenation is setting $r_0=0$ and the remaining $-\infty$ for the first head; $r_1=0$ and the remaining $-\infty$ for the second head. In other words, each head is only responsible for capturing one state transition. After the multi-head self-attention operation, we obtain the concatenation of two state transitions. 

Finally, when the prediction head reads out the answer, the operation is equivalent to a mapping from $\mathcal{A}(\sigma_{0:T})\in \mathbb{R}^{|Q|\times |Q|}$ to $\mathcal{A}_{q_0}(\sigma_{0:T})=\mathcal{A}(\sigma_{0:T})\circ q_0 \in \mathbb{R}^{|Q|}$. Since we assume that $o_{T-1}^{(l)}$ models $\mathcal{A}(\sigma_{0:T})$, the transduction readout is performed by a linear map on $o_{T-1}^{(l)}$ as $W_o o_{T-1}^{(l)}$.


\begin{figure*}[!ht]
    \centering
    \subfloat[\centering PARITY.]
    {{\includegraphics[width=0.43\linewidth]{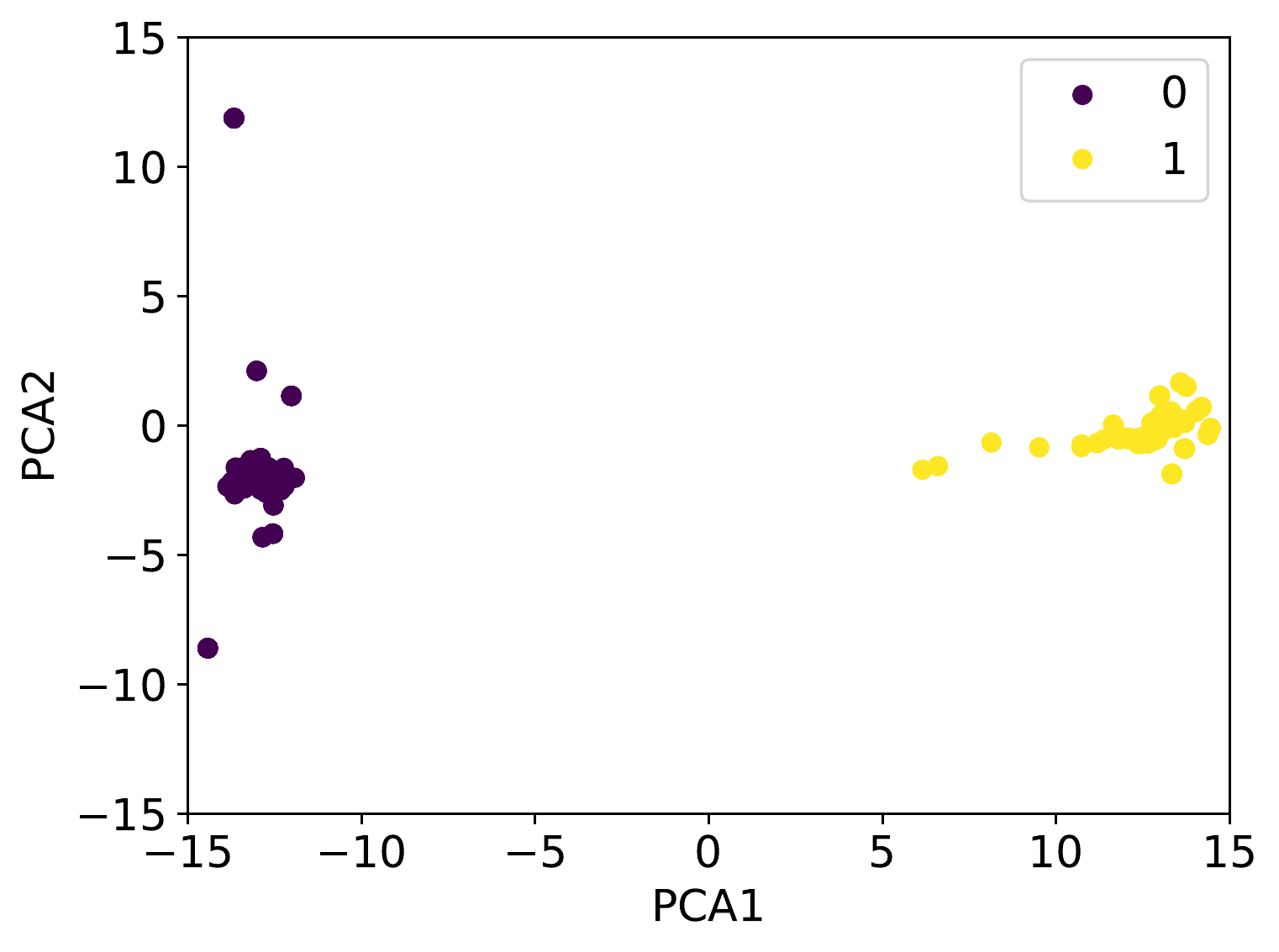} }}
    ~
    \subfloat[\centering Cycle Navigation.]{{\includegraphics[width=0.43\linewidth]{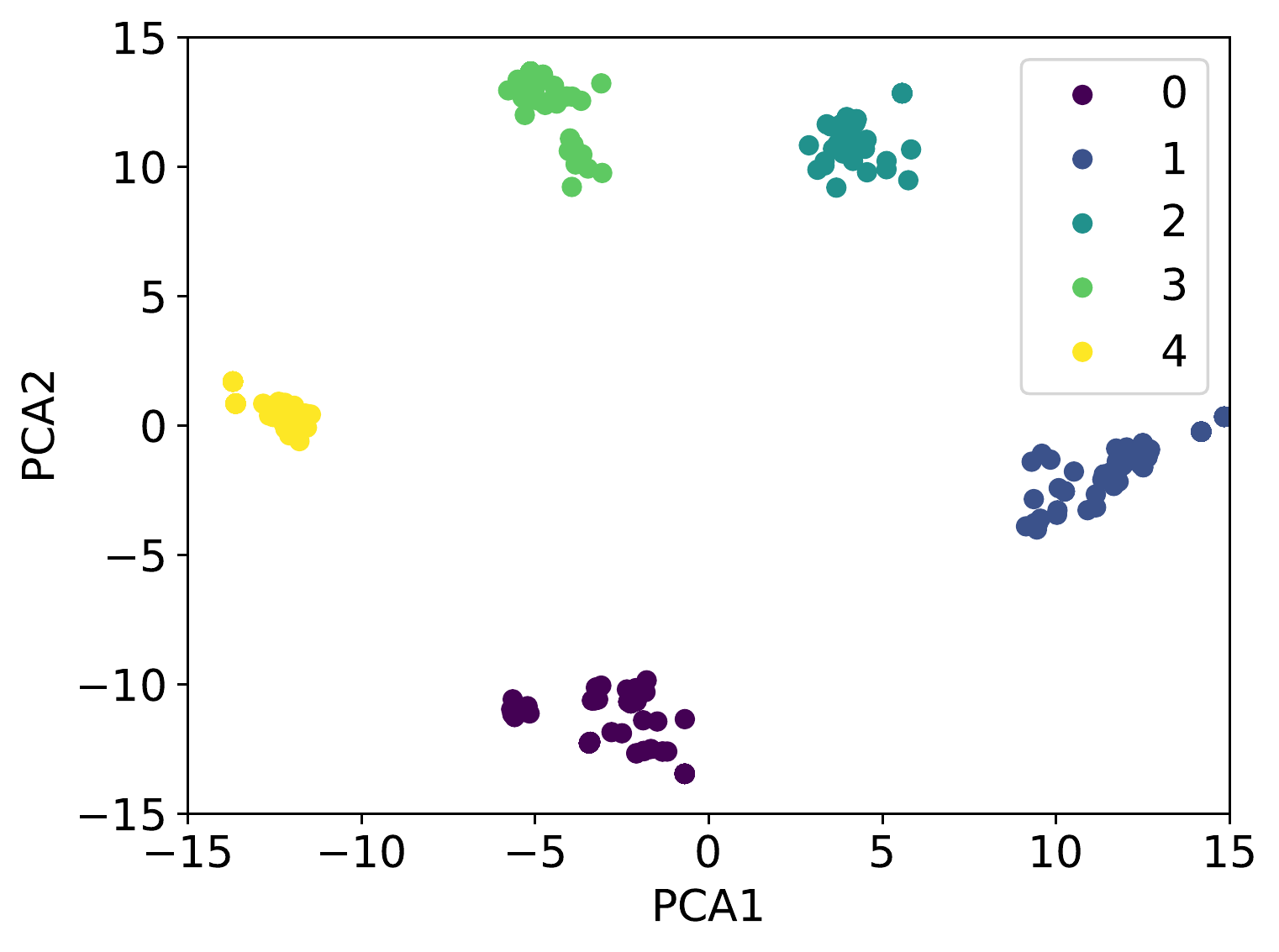} }}
    \caption{Clustering of FFN output vectors across all layers via PCA on the task of PARITY and Cycle Navigation.}
    \label{fig:clustering}
    \vspace{-4mm}
\end{figure*}

\begin{figure*}[!ht]
    \centering
    \subfloat[\label{fig:receptive_regular}\centering Regular Language - PARITY]{{\includegraphics[width=0.4\linewidth]{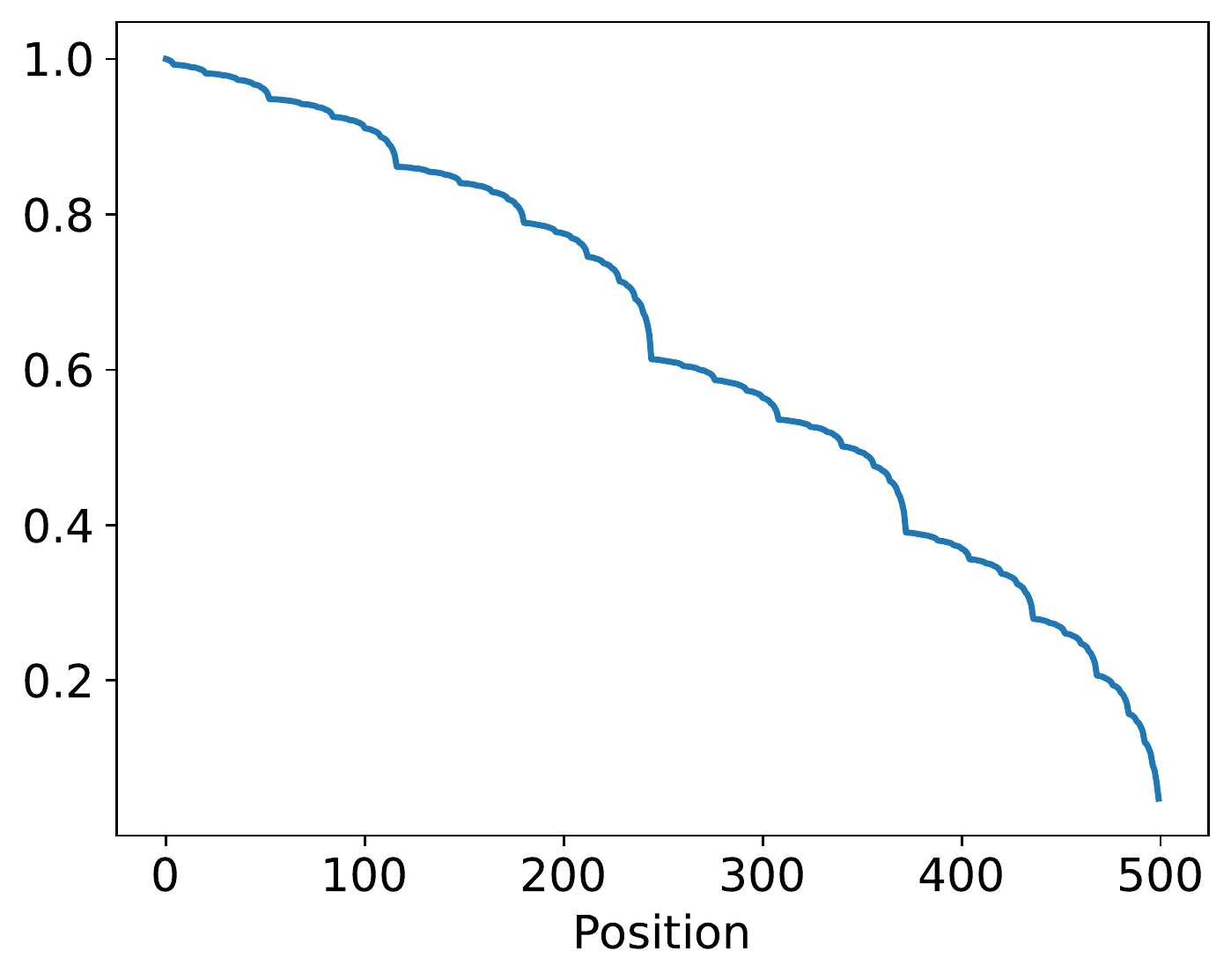} }}
    \quad\quad
    \subfloat[\label{fig:receptive_natural} \centering Natural Language - OpenWebText2]{{\includegraphics[width=0.4\linewidth]{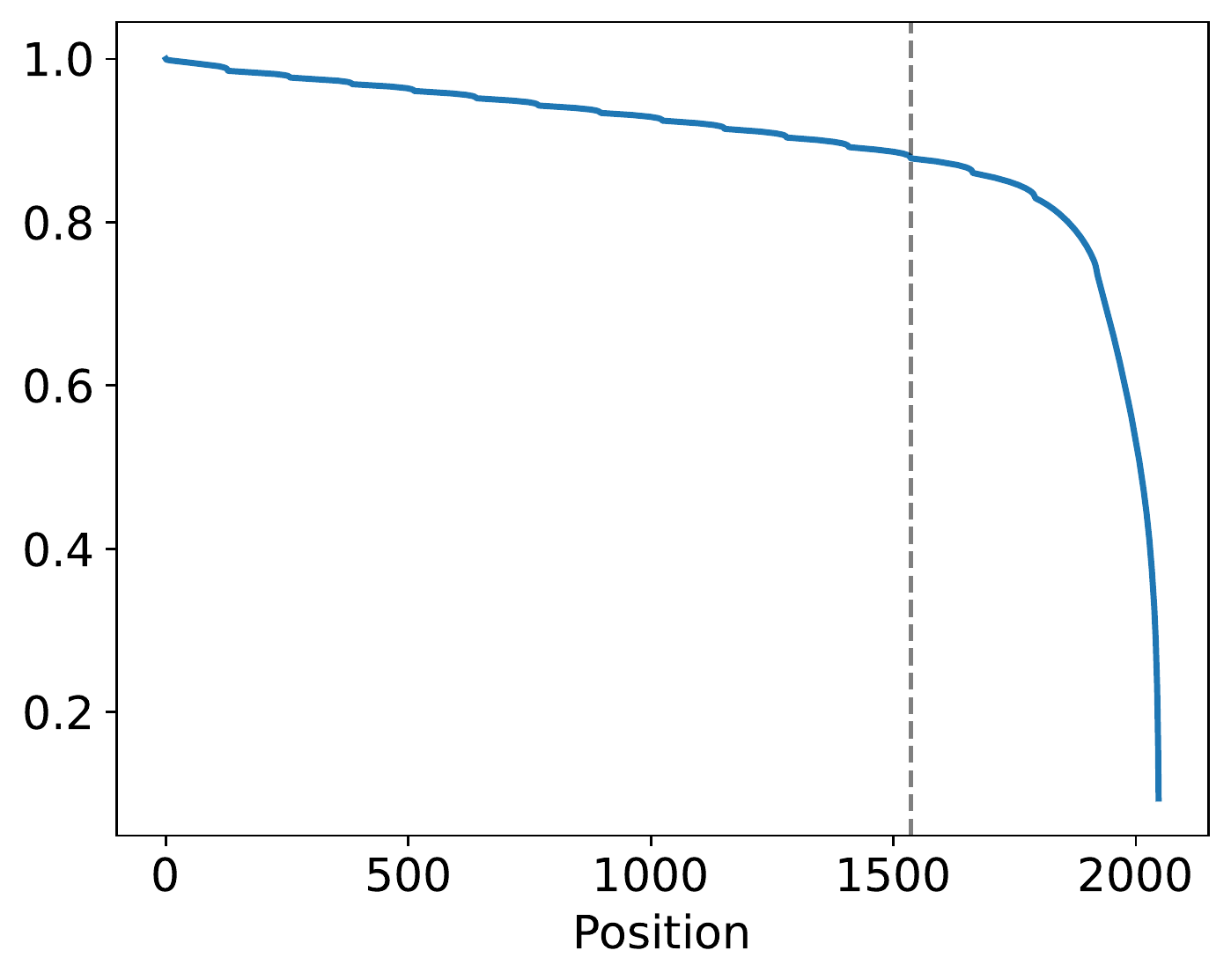} }}
    \caption{Receptive field of RegularGPT via the cumulative gradient analysis tool~\cite{sandwich}.}
    \vspace{-3mm}
    \label{fig:rec_plot}
\end{figure*}

\subsection{Verification of Transition Modeling}
\label{sec:veri}
To verify whether our model learns the dynamics of a semiautomaton, we perform a clustering experiment to demystify the FFN output representations on the tasks of PARITY and Cycle Navigation. The two tasks are chosen as we can easily derive their state transition functions. For example, there are only two state transitions in PARITY: 
$$\begin{bmatrix}
    1 & 0\\0 & 1
\end{bmatrix}~~~\text{or}~~~\begin{bmatrix}
    0 & 1\\1 & 0
\end{bmatrix}$$
and five state transitions in Cycle Navigation:
$$\begin{bmatrix}
    \text{OneHot}_5( (0+k) \text{mod}~5 )\\
    \text{OneHot}_5( (1+k) \text{mod}~5 )\\
    \text{OneHot}_5( (2+k) \text{mod}~5 )\\
    \text{OneHot}_5( (3+k) \text{mod}~5 )\\
    \text{OneHot}_5( (4+k) \text{mod}~5 )
\end{bmatrix},~~\text{for}~k\in[0,...,4].$$
e.g., $k=2$ gives
$
\begin{bmatrix}
    0 & 0 & 1 & 0 & 0\\
    0 & 0 & 0 & 1 & 0\\
    0 & 0 & 0 & 0 & 1\\
    1 & 0 & 0 & 0 & 0\\
    0 & 1 & 0 & 0 & 0
\end{bmatrix}
$.

Given a testing input sequence of length 500 that is much longer than the training length 40, we extract the output $o^{(l)}_{i}$ of all layers $l$, perform dimension reduction using PCA, and plot the dimension-reduced points on a 2D plane. Ideally, we want to see a limited number of clusters across all layers, indicating the model learns to capture the state transition function. As we can see in Figure~\ref{fig:clustering}, PARITY has 2 clusters and Cycle Navigation has 5 clusters. The clear clustering effect demonstrates RegularGPT's correct learning of state transition functions. This is in contrast to the naive-summation approach learned by a vanilla Transformer as shown in Figure B.4 of~\citet{deletang2023neural}.

\subsection{Receptive Field Analysis}
We resort to the gradient analysis tool~\cite{sandwich} to inspect the receptive field of RegularGPT on regular and natural languages. It computes a cumulative sum of the gradient norms starting from the most recent token to the earliest one. A large magnitude of slope at a position means the most recent token has a high dependency on that position. Ideally, we would like to see the receptive field covering the whole input sequence for the case of regular languages because every single bit in the input sequence is important for the final results. This is equivalent to a slanted line going from the lower right to the upper left, which is validated in Figure~\ref{fig:receptive_regular}. As for natural language, we discover something interesting in Figure~\ref{fig:receptive_natural} in that RegularGPT settles on the local windowed-attention pattern as those enforced manually in prior work~\cite{alibi,kerple,sandwich}. This suggests the task of natural language modeling mostly needs only local context to achieve good performance, which aligns with the common belief.

\section{Conclusion}

This paper introduces RegularGPT, a novel variant of the Transformer architecture inspired by the notion of working memory that can effectively model regular languages with high efficiency. Theoretical explanations and accompanying clustering visualizations are presented to illustrate how RegularGPT captures the essence of regular languages. Moreover, RegularGPT is evaluated on the task of natural language length extrapolation, revealing its intriguing rediscovery of the local windowed attention effect previously observed in related research. Notably, RegularGPT establishes profound connections with various existing architectures, thereby laying the groundwork for the development of future Transformer models that facilitate efficient algorithmic reasoning and length extrapolation.

\section*{Limitations}
Currently we set the chunk size $C$ of RegularGPT to a constant. Can we make the chunk size more flexible? A flexible and data-driven $C$ might further boost its performance on natural languages as they often demonstrate diverse patterns unlike regular languages underpinned by simple grammars. This might also improve the performance of RegularGPT when $C\neq 128$.

\bibliography{anthology}
\bibliographystyle{acl_natbib}

\clearpage

\appendix
\setcounter{lemma}{0}
\section{Hyperparameters for the Regular Language Experiments}
\label{sec:regular_params}
We report the hyperpamaters used in the regular language experiments (Table~\ref{tab:regulars}) in Table~\ref{tab:regular_params}.
\section{Hyperparameters for the Natural Language Experiments}
\begin{table*}[!htb]
    \centering
    \setlength{\tabcolsep}{3pt}
    \begin{tabular}{ccccc}
        \hline\hline
         \# Layers & Hidden Size & \# Attention Heads & Train Seq. Len. & \# Trainable Params.\\
         $\log_C T$ & 256 & 8 & 40 or 50 & 4.3 M \\ \hline
         Optimizer & Batch Size & Train Steps & Precision & Dataset\\
         Adam (lr 1e-4, 3e-4, 5e-4) & 128 & 100,000 & float32 & Regular Languages\\
         \hline\hline
    \end{tabular}
    \caption{Hyperparameters for the regular language experiments.}
    \label{tab:regular_params}
\end{table*}

\label{sec:natural_params}
We report the hyperpamaters used in the natural language experiments (Table~\ref{tab:natural}) in Table~\ref{tab:natural_params}.
\begin{table*}[!htb]
    \centering
    \setlength{\tabcolsep}{3pt}
    \begin{tabular}{ccccc}
        \hline\hline
         \# Layers & Hidden Size & \# Attention Heads & Train Seq. Len. & \# Trainable Params.\\
         $K\log_C T$ & 768 & 12 & 512 & 81M ($K=6$) or 123M ($K=12$)\\ \hline
         Optimizer & Batch Size & Train Steps & Precision & Dataset\\
         Adam (lr 6e-4) & 32 & 50,000 & bfloat16 & OpenWebText2\\
         \hline\hline
    \end{tabular}
    \caption{Hyperparameters for the natural language experiments.}
    \label{tab:natural_params}
\end{table*}

\section{Proof of Lemma~\ref{lem:approx-bin-prod}}
\label{sec:proof}

\begin{lemma}[Approximation for Binary Matrix Product]
	Let $A,B\in\{0,1\}^{n\times n}$ be binary matrices of dimension $n\times n$. Then, there exists a two-layer ReLU network such that 
    $$f_{\text{mlp}}([\text{Flat}(A), \text{Flat}(B)])=\text{Flat}(AB),$$
    where $\text{Flat}(X)_{(i-1)n+j}=X_{i,j}~~~\text{for}~i,j\in[1,...,n]$ is the operation that flattens a matrix into a vector.
    \label{lem:approx-bin-prod2}
\end{lemma}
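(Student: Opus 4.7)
The plan is to build the two-layer ReLU network explicitly by exploiting the binary structure of the inputs. The central observation is that for $x,y\in\{0,1\}$, the AND operation $xy$ can be exactly realized by a single ReLU unit via the identity $xy = \mathrm{ReLU}(x+y-1)$, which is immediate from case analysis on the four possible $(x,y)$ pairs. Since the $(i,j)$-entry of $AB$ is $\sum_{k=1}^{n} A_{ik}B_{kj}$, and each product $A_{ik}B_{kj}$ is a binary AND, the entire matrix product can be expressed as a sum of ReLU-computed AND gates.

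Concretely, I would construct the first layer to have $n^3$ hidden units indexed by triples $(i,j,k)$, where the unit $(i,j,k)$ takes the two input coordinates $A_{ik}$ and $B_{kj}$ (read off from $[\text{Flat}(A),\text{Flat}(B)]$ by a $0/1$ weight row), subtracts $1$ as a bias, and applies ReLU. This is realized by a fixed sparse weight matrix $W_1$ (with exactly two $1$'s per row, selecting the correct coordinates of $\text{Flat}(A)$ and $\text{Flat}(B)$) and a constant bias $b_1 = -\mathbf{1}$. The second layer then simply sums over the index $k$ to assemble each entry $(AB)_{ij}$: its weight matrix $W_2$ has rows indexed by the output positions $(i,j)$, with a $1$ in column $(i,j,k)$ for each $k\in[n]$ and $0$ elsewhere, and zero bias. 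Because all entries produced by the first layer are nonnegative, no additional activation is needed at the output.

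The remaining step is bookkeeping: verify that the composition $W_2\,\mathrm{ReLU}(W_1\,x + b_1)$ applied to $x=[\text{Flat}(A),\text{Flat}(B)]$ outputs a vector whose $((i-1)n+j)$-th coordinate equals $\sum_{k=1}^{n} A_{ik}B_{kj} = (AB)_{ij} = \text{Flat}(AB)_{(i-1)n+j}$. This is a direct calculation using the AND identity above, applied entrywise, and establishes $f_{\text{mlp}}([\text{Flat}(A),\text{Flat}(B)])=\text{Flat}(AB)$.

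I do not expect a genuine obstacle here: the argument is constructive, and the only subtlety is making sure the indexing of the sparse weight matrices $W_1$ and $W_2$ lines up with the $\text{Flat}$ convention used in the statement. The main thing to be careful about in the write-up is simply exhibiting $W_1$, $W_2$, and $b_1$ cleanly so that the verification step amounts to one line per output coordinate.
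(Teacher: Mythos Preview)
Your proposal is correct and matches the paper's own proof essentially line for line: the paper also uses the identity $ab=\mathrm{ReLU}(a+b-1)$ for $a,b\in\{0,1\}$, builds a first layer with $n^3$ units computing all products $A_{ik}B_{kj}$ via a sparse $0/1$ weight matrix with bias $-\mathbf{1}$, and a second linear layer that sums over $k$ to produce $\text{Flat}(AB)$. The only cosmetic difference is that the paper writes the affine maps as right-multiplication ($xW^{(1)}$, etc.) rather than left-multiplication, which amounts to transposing your $W_1$, $W_2$.
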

\begin{proof}
    Observe that a ReLU operation can perfectly approximate the multiplication of two binary scalars:
    $$\text{ReLU}(a+b-1) = a\cdot b,~~~\text{for}~a,b\in\{0,1\}.$$
    The binary matrix product $AB$ is composed of $n^3$ binary scalar products of the form:
    \begin{align*}
        &A_{ik}B_{kj}=x_{(i-1)n +k}x_{(n+k-1) n +j}\\
        &~~~~~~\text{for}~~~ i,j,k\in[1,..,n],
    \end{align*}
    where $x=[\text{Flat}(A), \text{Flat}(B)]$ is the concatenated flattened input. Our goal is to construct two neural network layers. The first layer computes all $n^3$ binary scalar products. The second layer sums these products into the form of matrix product; i.e., $\sum_{k=1}^n A_{ik}B_{kj}.$

    The first layer's binary weight matrix $W^{(1)}\in \{0,1\}^{2n^2\times n^3}$ is constructed as:
    \begin{equation}
    \begin{split}
        &\text{For}~z\in[1,...,2n^2],~~i,j,k\in[1,...,n],\\
        &W^{(1)}_{z,(i-1)n^2+(j-1)n+k}=\\
        &\begin{cases} 1 & \text{if~}z=(i-1)n+k~\text{or}~(n+k-1)n+j\\
    0 & \text{otherwise.}
    \end{cases}
    \label{eq:w1}
    \end{split}
    \end{equation}
    Then, the first layer computes all $n^3$ binary scalar products as follows:
    \begin{align*}
        &{\scriptstyle \text{ReLU}\left([\text{Flat}(A), \text{Flat}(B)] W^{(1)} - \one_{n^3}^\top\right)_{(i-1)n^2+(j-1)n+k}}\\
        &=A_{ik}B_{kj}~~~\text{for}~i,j,k\in[1,...,n].
    \end{align*}
    To sum these $n^3$ products into $n^2$ results, the second layer's binary weight matrix $W^{(2)}\in \{0,1\}^{n^3\times n^2}$ is constructed as:
    \begin{align*}
        W^{(2)} &= I_{n^2}\otimes \one_n= \begin{bmatrix}
        \one_n & 0_n & 0_n &\dots& 0_n\\
        0_n & \one_n & 0_n & \dots &0_n\\
        \vdots & & & & \vdots\\
        0_n & \ldots && 0_n& \one_n 
    \end{bmatrix}\\
    &\in\{0,1\}^{n^3\times n^2},
    \end{align*}
    where $I_{n^2}$ is an $n^2\times n^2$ identity matrix, $\otimes$ is the Kronecker product, $0_n$ is an n-dimensional column vector of all zeros, and $\one_n$ is an n-dimensional column vector of all ones. We arrive at a two-layer ReLU network that perfectly approximates the multiplication of two binary matrices:
    \begin{align*}
        &~~~f_{\text{mlp}}([\text{Flat}(A), \text{Flat}(B)])\\
        &{\scriptstyle~ =\text{ReLU}\left([\text{Flat}(A), \text{Flat}(B)] W^{(1)} - \one_{n^3}^\top\right)W^{(2)}}\\
        &= \text{Flat}(AB).
    \end{align*}
\end{proof}

\section{Illustration of Lemma~\ref{lem:approx-bin-prod}}

\subsection{Illustration of the Binary Weight Matrices}

We illustrate $W^{(1)}$ and $W^{(2)}$ of Lemma~\ref{lem:approx-bin-prod} as follows:
{
\small
\begin{lstlisting}[language=Python]
import numpy as np

def get_W1(n):
    n2 = n*n
    W1 = np.zeros((2*n*n, n*n*n), dtype=int)
    for i in range(n):
        for j in range(n):
            for k in range(n):
                W1[i*n+k,i*n2+j*n+k] = 1
                W1[n2+k*n+j,i*n2+j*n+k] = 1
    return W1

def get_W2(n):
    eye = np.eye(n*n, dtype=int)
    ones = np.ones((n,1), dtype=int)
    W2 = np.kron(eye, ones)
    return W2
\end{lstlisting}
}
\noindent get\_W1(2) gives:
\begin{lstlisting}[language=Python]
[[1 0 1 0 0 0 0 0]
 [0 1 0 1 0 0 0 0]
 [0 0 0 0 1 0 1 0]
 [0 0 0 0 0 1 0 1]
 [1 0 0 0 1 0 0 0]
 [0 0 1 0 0 0 1 0]
 [0 1 0 0 0 1 0 0]
 [0 0 0 1 0 0 0 1]]
 \end{lstlisting}

\noindent get\_W2(2) gives:
\begin{lstlisting}[language=Python]
[[1 0 0 0]
 [1 0 0 0]
 [0 1 0 0]
 [0 1 0 0]
 [0 0 1 0]
 [0 0 1 0]
 [0 0 0 1]
 [0 0 0 1]]
 \end{lstlisting}

\subsection{An Illustrative Example for $n=2$}

Suppose the input matrices are:
$$
A= \begin{bmatrix}
    1 & 0\\ 1 & 0
\end{bmatrix},~~~B=\begin{bmatrix}
    0 & 1\\1& 0
\end{bmatrix}.
$$
The concatenated flattened input becomes:
$$x=[\text{Flat}(A), \text{Flat}(B)] = [1~0~1~0~0~1~1~0].$$
Then, Lemma~\ref{lem:approx-bin-prod} is verified as follows:
\begin{align*}
    &\text{ReLU}\left(x W^{(1)} - \one_{n^3}^\top\right)W^{(2)}\\
    =&\text{ReLU}\left( [1~1~2~0~1~1~2~0]-1 \right)W^{(2)}\\
    =&[0~0~1~0~0~0~1~0]W^{(2)}\\
    =&[0~1~0~1]\\
    =&\text{Flat}\left(\begin{bmatrix}
        0 & 1\\0 & 1
    \end{bmatrix}\right)=\text{Flat}\left( AB\right).
\end{align*}
Here is the Python code for the above example:
{\small
\begin{lstlisting}[language=Python]
A = np.array([[1,0],[1,0]]).reshape(-1)
B = np.array([[0,1],[1,0]]).reshape(-1)
x = np.concatenate([A, B]).reshape(1,-1)
W1 = get_W1(2)
W2 = get_W2(2)
flat_AB = np.maximum(x @ W1 -1,0) @ W2 
\end{lstlisting}
}
\end{document}